\theoremstyle{definition}
\newtheorem{theorem}{Theorem}
\newaliascnt{prop}{theorem}
\newtheorem{prop}[prop]{Proposition}
\newcommand{\bhline}[1]{\noalign{\hrule height #1}}  
\newcommand{\PMI}{\mathrm{PMI}}
\newcommand{\word}[1]{\textit{#1}}
\newcommand{\eg}{e.g.}
\newcommand{\ie}{i.e.}
\title{Revisiting Additive Compositionality: AND, OR, and NOT Operations with Word Embeddings}
\author{
  Masahiro Naito$^{1,3}$ \qquad Sho Yokoi $^{2,3}$ \qquad Geewook Kim$^4$ \qquad Hidetoshi Shimodaira $^{1,3}$ \\
  $^1$Kyoto University \qquad $^2$Tohoku University \qquad $^3$RIKEN \qquad $^4$ Naver\\
  \texttt{neiteng@sys.i.kyoto-u.ac.jp, yokoi@ecei.tohoku.ac.jp,}\\
  \texttt{kdrl7@naver.com, shimo@i.kyoto-u.ac.jp}\\
}
\begin{document}
\maketitle
\begin{abstract}
    It is well-known that typical word embedding methods have the property that the meaning can be composed by adding up the embeddings (additive compositionality). Several theories have been proposed to explain additive compositionality, but the following problems remain: (i) The assumptions of those theories do not hold for practical word embedding. (ii) Ordinary additive compositionality can be seen as an AND operation of word meanings, but it is not well understood how other operations, such as OR and NOT, can be computed by the embeddings. We address these issues with the idea of frequency-weighted centering at its core. This method bridges the gap between practical word embedding and the assumption of theory about additive compositionality as an answer to (i). This paper also gives a method for taking OR or NOT of the meaning by linear operation of word embedding as an answer to (ii). Moreover, we confirm experimentally that the accuracy of AND operation, i.e., the ordinary additive compositionality, can be improved by our post-processing method (3.5x improvement in top-100 accuracy) and that OR and NOT operations can be performed correctly. We also confirm that the proposed method is effective for BERT embeddings.
\end{abstract}

\section{Introduction} \label{sec:intro}

\begin{figure}
    \centering
    \includegraphics[width=75mm]{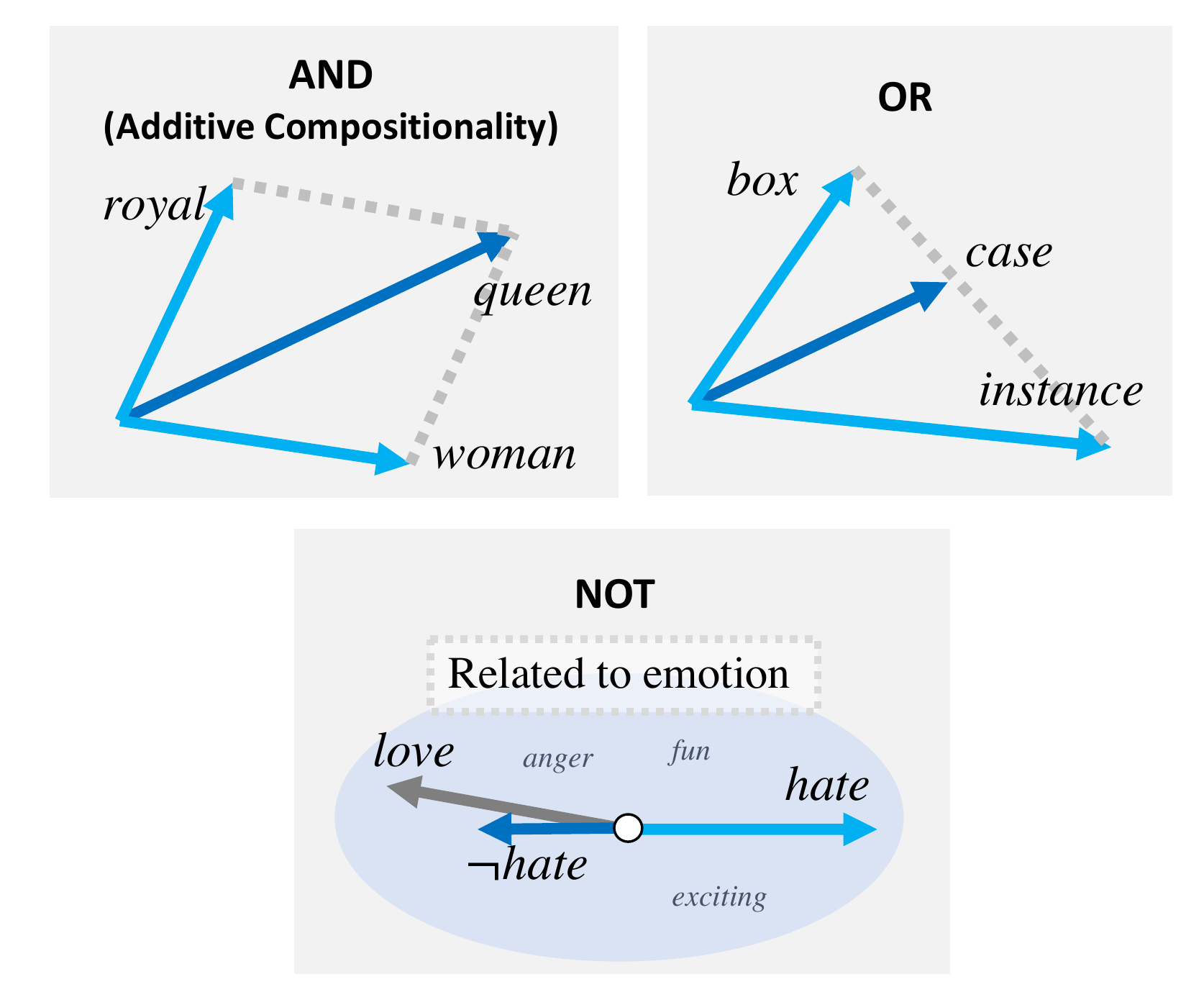}
    \caption{Illustration of AND, OR, and NOT operations with word embeddings.}
    \label{fig:and_or_not_illust}
\end{figure}

Word embedding \cite{sgns, glove, bert}, a fundamental technology in natural language processing, has the property that meaning can be composed by adding up the embeddings. This property is called \emph{additive compositionality}, \eg, $\bm{v}_{\word{king}} \approx \bm{v}_{\word{royal}} + \bm{v}_{\word{man}}$ \cite{sgns}. In this paper, we raise and resolve two questions about additive compositionality, as described below.

(i) Are the existing theories of additive compositionality realistic and unified? Several theories have been proposed to explain why additive compositionality holds \cite{arora-latent,gittens,allen}; however, assumptions that do not hold for SGNS and GloVe have been made in these papers. Besides, since these theories depend on specific methods such as Skip-Gram \cite{skipgram}, a unified understanding with other methods such as GloVe \cite{glove}, and BERT \cite{bert} remains a challenge in the field of natural language processing.

(ii)
As will be explained later, ordinary additive compositionality corresponds to AND in logical operations. This means that AND corresponds to the addition of embeddings; what embedding operations do OR and NOT correspond to? Examples of OR are $\word{case} \approx \word{box} \lor \word{instance}$ (polysemous word / homograph) and $\word{child} = \word{boy} \lor \word{girl}$ (hypernym-hyponym). An example of NOT is $\word{hate} \approx \lnot \word{love}$ (antonym).

\begin{figure}
    \centering
    \includegraphics[width=75mm]{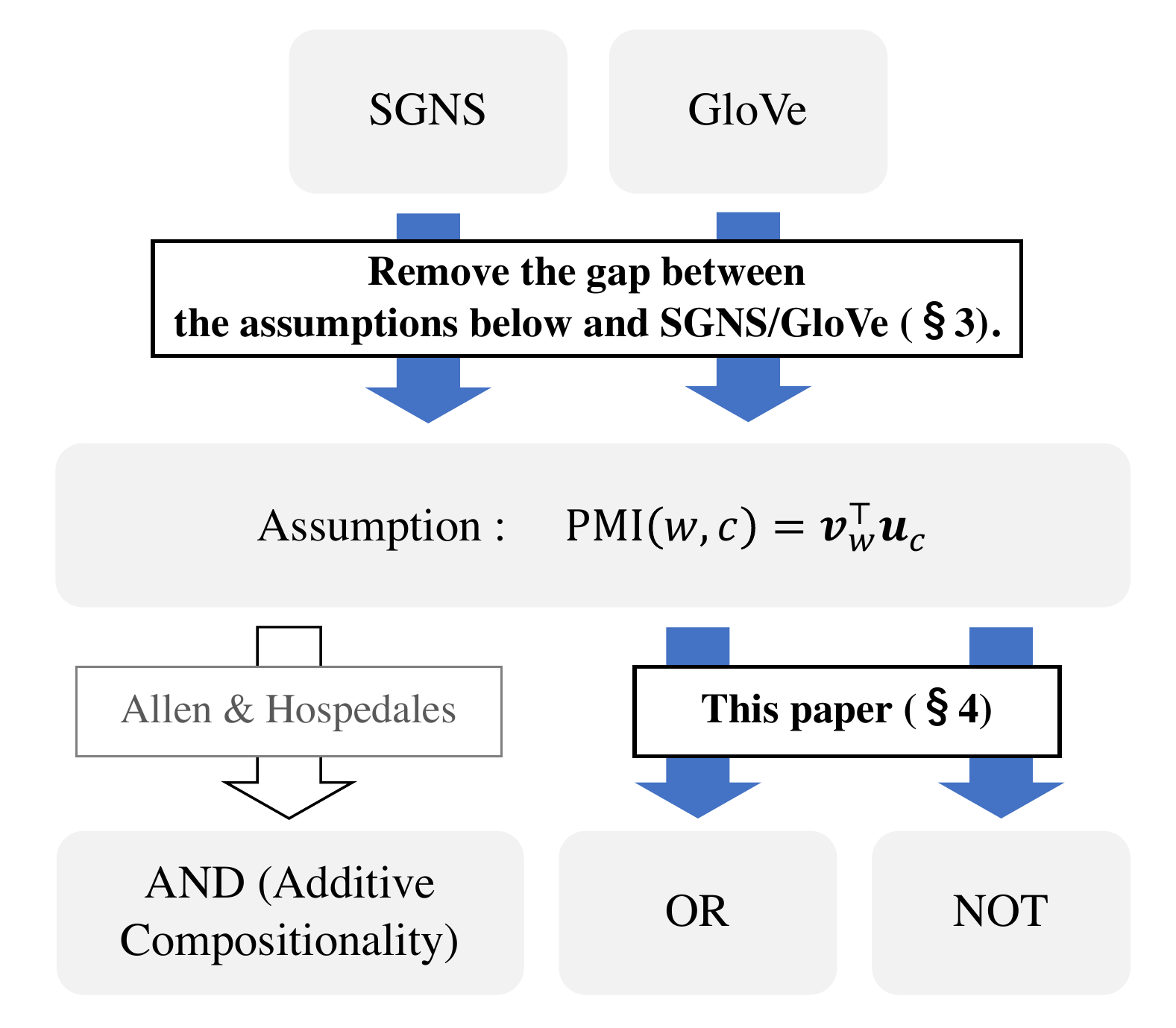}
    \caption{An overview of this paper and the previous research. }
    \label{fig:punch}
\end{figure}

We provide the following theoretical and experimental contributions to the above-mentioned questions.
\begin{enumerate}
    \item We show that the gap between SGNS/GloVe and the assumption of \cite{allen} almost disappears when word embeddings are centered using the frequency-weighted average of vocabulary words. In other words, by coupling our theory with \cite{allen} we explain the mechanism of additive compositionality in SGNS and GloVe. Besides, we propose a method to make additive compositionality more strongly hold.
    Our theory holds for both \textbf{SGNS} (Skip-Gram with Negative Sampling, one of the variations of Word2Vec by \citealt{sgns}) and \textbf{GloVe} \cite{glove}. This implies that centering allows SGNS and GloVe to be described in an almost unified form. Pointing out the similarities between the limited BERT architecture and SGNS, we suggest that frequency-weighted centering may be applicable to BERT embeddings as well. \label{enum:intro_pmi}
    \item Utilizing the results obtained in \ref{enum:intro_pmi}. as a starting point, we extend the theory of ordinary additive compositionality (AND) to compositionality of OR and NOT (see \autoref{fig:and_or_not_illust}).
    OR operation is a frequency-weighted average for a specified subset of vocabulary words. NOT operation is based on a novel \emph{conditional embedding} that is computed by frequency-weighted centering. 
    
    \item We experimentally confirm that our theory is correct (\autoref{sec:exp}). The experimental results show that frequency-weighted centering makes additive compositionality, which corresponds to AND operation, holds more accurately (3.5x improvement in top-100 accuracy). We also confirm that this method is effective for BERT embeddings. We also showed that the proposed formula can successfully compute OR and NOT embeddings.
    
\end{enumerate}

\section{Preliminaries: Word Embedding}

In this section, 
we briefly introduce some properties of popular word embedding methods. In the next section, we point out the gap between these properties and the assumption of the theory of additive compositionality \cite{allen}, and propose a method to resolve it.

Word embedding methods based on co-occurrence information between words, such as SGNS and GloVe, are used across a wide range of fields, such as information retrieval and recommendation systems \cite{info_ret,node2vec,prod2vec}.

SGNS and GloVe (and maybe BERT) encode the co-occurrence information of words.
\citet{levy} showed that optimally trained SGNS embedding satisfies
\begin{align}
    \log \frac{p(w, c)}{p(w)q(c)} - \log k = \bm{v}_w^\top \bm{u}_c, \label{eq:sgns_fac}
\end{align}
where $p$ is the word distribution of corpus, $q$ is the distribution of negative samples, $k$ is the number of negative samples per co-occurring word pair $(w,c)$, $\bm{v}_w$ is the embedding of a target word $w$, and $\bm{u}_c$ is the embedding of a context word $c$.
GloVe~\cite{glove} takes a direct approach to factorize the co-occurrence matrix, and the optimally learned embedding satisfies
\begin{align}
    \log p(w,c) = \bm{v}_w^\top \bm{u}_c + a_w + b_c - \log Z, \label{eq:glove_fac}
\end{align}
where $a_w, b_c$ are bias terms and $Z$ is a normalization constant.
In the following, we assume that SGNS and GloVe satisfy (\ref{eq:sgns_fac}) and (\ref{eq:glove_fac}), respectively.

\section{Structure Common to SGNS and GloVe} \label{sec:pmi}

In this section, we show that when SGNS and GloVe are centered using the frequency-weighted average, they share a common structure.

\citet{allen} explained additive compositionality with the assumption
\begin{align}
    \PMI(w,c) = \bm{v}_w^\top\bm{u}_c \label{eq:pmi}
    \text{,}
\end{align}
where $\PMI(w,c)$ is the pointwise mutual information (PMI) between $w$ and $c$
\begin{align}
    \PMI(w, c) := \log \frac{p(w, c)}{p(w)p(c)}
    \text{.}
\end{align}
However, neither SGNS nor GloVe satisfies the assumption (\ref{eq:pmi}), as will be explained in \autoref{subsec:err_term}. If we can adjust the word embeddings to satisfy assumption (\ref{eq:pmi}), then additive compositionality should hold more accurately. 

In this section, we show a simple post-processing method for this adjustment of word embeddings, which can be applied to both SGNS and GloVe.

\subsection{Error Terms in (\ref{eq:pmi})} \label{subsec:err_term}

Rearranging the formulas of the word embedding assumptions (\ref{eq:sgns_fac}) and (\ref{eq:glove_fac}), we have
\begin{align}
    \textbf{SGNS}\qquad &\PMI(w,c) \nonumber \\
    & = \bm{v}_w^\top \bm{u}_c + \log \frac{q(c)}{p(c)} +  \log k, \label{eq:sgns_model_pmi} \\
    \textbf{GloVe}\qquad &\PMI(w,c) \nonumber \\
    & = \bm{v}_w^\top \bm{u}_c + (a_w - \log p(w)) \nonumber \\
    & \mathrel{\hphantom{=}} + (b_c - \log p(c)) - \log Z. \label{eq:glove_model_pmi}
\end{align}
Clearly, they differ from (\ref{eq:pmi}).  \citet{allen} ignores the second and subsequent terms on the right-hand side of (\ref{eq:sgns_model_pmi}) and (\ref{eq:glove_model_pmi}); these ignored terms are considered as error terms in the assumption~(\ref{eq:pmi}). Experiments in this paper show that these error terms are \textit{not} negligible (\autoref{subsec:exp_pmi}).

\subsection{Frequency-weighted Centering} \label{subsec:freq_centering}

First, we show that (\ref{eq:pmi}) can be derived by centering the SGNS/GloVe embedding in a form that includes some error terms. For the word embeddings $\bm{v}_w$ and $\bm{u}_c$, the frequency-weighted averages of word embeddings are
\begin{align} \label{eq:frequency-weighted-average}
\bm{\bar{v}} = \sum_w p(w) \bm{v}_w,\quad
\bm{\bar{u}} = \sum_c p(c) \bm{u}_c,
\end{align}
and the centered word embeddings are
\begin{align} \label{eq:frequency-weighted-centering}
\bm{\tilde{v}}_w = \bm{v}_w - \bm{\bar{v}},\quad
\bm{\tilde{u}}_c = \bm{u}_c - \bm{\bar{u}}.
\end{align}

\begin{theorem}
    \label{th:pmi_th}
    When the embedding of SGNS and GloVe satisfies (\ref{eq:sgns_fac}) and (\ref{eq:glove_fac}), respectively, the following equality holds:
    \begin{align}
        \PMI(w,c) = \bm{\tilde{v}}_w^\top \bm{\tilde{u}}_c + \bar{\epsilon} - \epsilon_w - \epsilon_c, \label{eq:pmi_th}
    \end{align}
    where the error terms are defined, with KL-divergence, as $\epsilon_w = D_{\mathrm{KL}} (p(\cdot) \| p(\cdot | w))$, $\epsilon_c = D_{\mathrm{KL}} (p(\cdot) \| p(\cdot | c))$, and $\bar{\epsilon} = \sum_w p(w) \epsilon_w$.
\end{theorem}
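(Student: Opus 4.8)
The plan is to collapse the two factorizations~(\ref{eq:sgns_fac}) and~(\ref{eq:glove_fac}) into a single ``master form'' and then carry out the centering once. Using $p(w,c) = p(w)\,p(c\mid w)$, both assumptions can be rewritten as
\begin{align}
  \bm{v}_w^\top \bm{u}_c = \log p(c\mid w) + f(w) + g(c),
\end{align}
with $f(w) = 0$, $g(c) = -\log q(c) - \log k$ for SGNS, and $f(w) = \log p(w) - a_w$, $g(c) = \log Z - b_c$ for GloVe. After this step the two methods are handled identically, which is exactly the common structure this section is after.

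The core computation is to expand
\begin{align}
  \bm{\tilde{v}}_w^\top \bm{\tilde{u}}_c = \bm{v}_w^\top\bm{u}_c - \bm{v}_w^\top\bm{\bar{u}} - \bm{\bar{v}}^\top\bm{u}_c + \bm{\bar{v}}^\top\bm{\bar{u}}
\end{align}
and evaluate each partial average using the master form above. The two identities that do all the work are
\begin{align}
  \sum_c p(c)\log p(c\mid w) &= -H - \epsilon_w, \\
  \sum_w p(w)\log p(c\mid w) &= \log p(c) - \epsilon_c,
\end{align}
where $H := -\sum_c p(c)\log p(c)$. The first is immediate from the definition $\epsilon_w = D_{\mathrm{KL}}(p(\cdot)\,\|\,p(\cdot\mid w))$ together with the entropy. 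The second uses Bayes' rule, $\log p(c\mid w) = \log p(w\mid c) + \log p(c) - \log p(w)$: averaging over $w$ turns the first term into $-\epsilon_c$ minus the target-side entropy, which then cancels the $-\sum_w p(w)\log p(w)$ contributed by the $-\log p(w)$ term. This cancellation is the one slightly delicate point, and it is why the final error involves only $\epsilon_w$, $\epsilon_c$, and $\bar{\epsilon}$ and no target-side entropy. Substituting, one obtains $\bm{\bar{v}}^\top\bm{u}_c = \log p(c) - \epsilon_c + \bar{f} + g(c)$, $\bm{v}_w^\top\bm{\bar{u}} = -H - \epsilon_w + f(w) + \bar{g}$, and $\bm{\bar{v}}^\top\bm{\bar{u}} = -H - \bar{\epsilon} + \bar{f} + \bar{g}$, where $\bar{f} = \sum_w p(w)f(w)$ and $\bar{g} = \sum_c p(c)g(c)$.

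Finally, inserting the four pieces into the expansion, the method-dependent quantities $f(w)$, $g(c)$, $\bar{f}$, $\bar{g}$ cancel in pairs and so does the constant $H$, leaving $\bm{\tilde{v}}_w^\top\bm{\tilde{u}}_c = \log p(c\mid w) - \log p(c) + \epsilon_w + \epsilon_c - \bar{\epsilon}$. Since $\log p(c\mid w) - \log p(c) = \PMI(w,c)$, rearranging gives~(\ref{eq:pmi_th}). I expect the bookkeeping in this last substitution to be the only laborious part; conceptually everything rests on the master form and the Bayes-rule step that eliminates the target-side entropy.
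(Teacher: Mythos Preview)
Your proof is correct and follows essentially the same approach as the paper: both unify SGNS and GloVe into a single additive master form and then remove the method-dependent terms by frequency-weighted centering. The only cosmetic difference is that the paper works with $\PMI(w,c)=\bm{v}_w^\top\bm{u}_c+\zeta_w+\xi_c+\gamma$ and performs two successive ``average and subtract'' steps (over $w$, then over $c$), whereas you expand $\tilde{\bm v}_w^\top\tilde{\bm u}_c$ into four pieces; your route therefore introduces the entropy $H$ and needs the Bayes-rule cancellation, which the paper's organization sidesteps, but the underlying computation is the same.
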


\begin{proof}
    See \autoref{sec:pmi_proof}. 
\end{proof}

The following proposition shows that the error terms are negligible when $|\PMI(w,c)|\ll 1$.

\begin{prop} \label{prop:pmi_order}
    Let $\Delta = \max_{w,c} |\PMI(w,c)|$. For sufficiently small $\Delta$,  $\epsilon_w = O(\Delta^2), \bar\epsilon = O(\Delta^2)$.
\end{prop}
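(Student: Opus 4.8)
The plan is to express $\epsilon_w$ directly in terms of the PMI values and then Taylor-expand around the small quantity $\Delta$. First, I would use the identity $\PMI(w,c) = \log\frac{p(c\mid w)}{p(c)}$, which gives
\begin{align*}
\epsilon_w = D_{\mathrm{KL}}\bigl(p(\cdot)\,\big\|\,p(\cdot\mid w)\bigr) = \sum_c p(c)\log\frac{p(c)}{p(c\mid w)} = -\sum_c p(c)\,\PMI(w,c).
\end{align*}
So it suffices to show that $\sum_c p(c)\,\PMI(w,c) = O(\Delta^2)$, uniformly in $w$.

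The key extra ingredient is the normalization constraint $\sum_c p(c\mid w) = 1$, which rewrites as $\sum_c p(c)\,e^{\PMI(w,c)} = 1$. Since $|\PMI(w,c)|\le\Delta$, I would expand $e^x = 1 + x + \tfrac12 x^2 + r(x)$ with a remainder satisfying $|r(x)|\le C\Delta^3$ for all $|x|\le\Delta$ (valid for, say, $\Delta\le 1$, because $\exp$ is smooth on a bounded interval, so its Taylor remainder is uniformly cubic there). Substituting into the constraint and using $\sum_c p(c)=1$ yields
\begin{align*}
\sum_c p(c)\,\PMI(w,c) = -\tfrac12\sum_c p(c)\,\PMI(w,c)^2 \;-\; \sum_c p(c)\,r\bigl(\PMI(w,c)\bigr),
\end{align*}
whose right-hand side is bounded in absolute value by $\tfrac12\Delta^2 + C\Delta^3$. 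Hence $\epsilon_w = \tfrac12\sum_c p(c)\,\PMI(w,c)^2 + O(\Delta^3) = O(\Delta^2)$, with the implied constant independent of $w$; note this is consistent with $\epsilon_w\ge 0$, as it must be for a KL-divergence (and by the symmetric argument $\epsilon_c = O(\Delta^2)$ as well).

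Finally, since the bound $0\le\epsilon_w\le \tfrac12\Delta^2 + C\Delta^3$ holds uniformly over all target words $w$, averaging with weights $p(w)$ gives $\bar\epsilon = \sum_w p(w)\,\epsilon_w \le \tfrac12\Delta^2 + C\Delta^3 = O(\Delta^2)$. The only point requiring care is that the Taylor remainder estimate must be uniform over the context words $c$ so that the termwise bounds can legitimately be summed against $p(c)$ and then against $p(w)$; this follows immediately from smoothness of $\exp$ on $[-\Delta,\Delta]$, so I do not expect a genuine obstacle — the proposition is essentially the standard fact that KL-divergence between nearby distributions is second order, specialized to $p(\cdot\mid w)$ and $p(\cdot)$.
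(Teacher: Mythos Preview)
Your proof is correct and follows essentially the same route as the paper: both arguments express $\epsilon_w = -\sum_c p(c)\,\PMI(w,c)$ and then use the normalization identity $\sum_c p(c\mid w)=1$ together with a Taylor expansion to show that the first-order term vanishes, leaving an $O(\Delta^2)$ remainder. The only cosmetic difference is that the paper Taylor-expands $\log x$ around $x=1$ (writing $\PMI(w,c) = (p(w,c)/p(w)p(c)-1) + O(\Delta^2)$ and summing), whereas you Taylor-expand $e^x$ around $x=0$ inside the constraint $\sum_c p(c)\,e^{\PMI(w,c)}=1$; these are dual formulations of the same cancellation, and your version has the minor bonus of making the leading coefficient $\tfrac12\sum_c p(c)\,\PMI(w,c)^2$ explicit.
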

\begin{proof}
    See \autoref{sec:pmi_order_proof}. 
\end{proof}

\subsection{Discussion}

\paragraph{Interpretation} Theorem~\ref{th:pmi_th} 
suggests that the centered SGNS and GloVe can be described in roughly the same form. In other words, we can say that (\ref{eq:pmi}) is a structure essentially common to SGNS and GloVe if properly centered.

\paragraph{Relation to experimental results} The experiments described below confirm that the error in assumption (\ref{eq:pmi}) is significantly reduced by the frequency-weighted centering (\autoref{subsec:exp_pmi}), which supports the theory in \autoref{subsec:freq_centering}. Furthermore, we have confirmed that the accuracy of additive compositionality is improved by frequency-weighted centering, as we expected. These improved word vectors are applicable to various downstream tasks.

\paragraph{Comparison with All-but-The-Top} \citet{abtt} suggested that uniform centering $\bm{v}_w \leftarrow \bm{v}_w - \sum_c \bm{v}_c / |V|$ is helpful as a post-processing method to get high-performance word embeddings. This method is described as adjusting the embeddings to satisfy isotropy, a property that the RAND-WALK model \cite{arora-latent} should satisfy. However, its argument is not complete because the theoretical basis for RAND-WALK's high performance on each downstream task is not clearly stated. On the other hand, since our method is designed with the goal of satisfying the assumption of the theory of additive compositionality \cite{allen}, there is a direct connection between our theory and the experimental results.

\section{Logical Operations with Word Embeddings} \label{sec:logical}

In this section, we point out that ordinary additive compositionality is an AND-like operation. We show that other logical operations, such as OR and NOT, can also be computed from embeddings. We adopt assumption (\ref{eq:pmi}) in this section as well as \citet{allen}; embeddings satisfying (\ref{eq:pmi}) can be obtained by the simple post-processing of SGNS and GloVe (see \autoref{sec:pmi}).

\subsection{AND Operation} \label{subsec:th_and}

\citet{allen} showed that when the PMI factorization structure (\ref{eq:pmi}) is strictly satisfied, a semantic AND composite such as $\textit{queen} = \textit{royal} \land \textit{woman}$ corresponds to vector additivity such as the following formula:
\begin{align}
    \bm{v}_{\textit{royal}} = \bm{v}_{\textit{royal}} + \bm{v}_{\textit{woman}} . \label{eq:and_ex}
\end{align}
In this section, we outline the proof of \citet{allen}. 

\subsubsection{Formulation with Co-occurrence Probability}
Let $w = w_1 \land w_2 \land \cdots \land w_s$. Let us assume, for example, that the probability of occurrence of \word{queen} meaning is the multiplication of the probabilities of \word{royal} and \word{woman} meaning. Generalizing this, we formulate AND-like compositionality as follows:
\begin{align}
    \forall c \in V, \quad p(w|c) &= p(w_1|c) \cdots p(w_s|c) \label{eq:and_modeling_c}, \\
    p(w) &= p(w_1) \cdots p(w_s) \label{eq:and_modeling}
\end{align}

\subsubsection{Computation on Embedding Space}

From the above formulation, additive compositionality is proved.
\begin{theorem}[\citealt{allen}] \label{th:and_th}
    When $w, w_1, \ldots, w_s$ satisfy (\ref{eq:pmi}), (\ref{eq:and_modeling_c}) and (\ref{eq:and_modeling}), 
    \begin{align}
        \bm{v}_w = \sum_{i = 1}^s \bm{v}_{w_i} . \label{eq:and_th_v}
    \end{align}
\end{theorem}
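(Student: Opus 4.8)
The plan is to convert the two probabilistic factorizations into a single additive identity for $\PMI$, push that identity through the bilinear assumption~(\ref{eq:pmi}), and finish with a spanning argument over the context vectors. First I would fix an arbitrary $c \in V$ and expand
\begin{align}
    \PMI(w,c) &= \log\frac{p(w\mid c)}{p(w)}
    = \log\frac{p(w_1\mid c)\cdots p(w_s\mid c)}{p(w_1)\cdots p(w_s)} \nonumber\\
    &= \sum_{i=1}^s \log\frac{p(w_i\mid c)}{p(w_i)}
    = \sum_{i=1}^s \PMI(w_i,c),
\end{align}
where the first equality is the elementary rewriting $\PMI(w,c) = \log\bigl(p(w\mid c)p(c)\bigr) - \log\bigl(p(w)p(c)\bigr)$, and the second uses exactly (\ref{eq:and_modeling_c}) for the numerator and (\ref{eq:and_modeling}) for the denominator. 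So the AND modeling is equivalent to the statement that, as functions on $V$, $\PMI(w,\cdot) = \sum_{i=1}^s \PMI(w_i,\cdot)$.

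Next I would apply assumption~(\ref{eq:pmi}) to both sides of this identity: the left-hand side becomes $\bm{v}_w^\top \bm{u}_c$, and the right-hand side becomes $\sum_{i=1}^s \bm{v}_{w_i}^\top \bm{u}_c = \bigl(\sum_{i=1}^s \bm{v}_{w_i}\bigr)^\top \bm{u}_c$. Subtracting, $\bigl(\bm{v}_w - \sum_{i=1}^s \bm{v}_{w_i}\bigr)^\top \bm{u}_c = 0$ for every context word $c \in V$.

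The remaining — and only nontrivial — step is to conclude that the bracketed vector is itself $\bm{0}$. This requires the context embeddings $\{\bm{u}_c\}_{c\in V}$ to span the whole embedding space, equivalently that the context-embedding matrix has full rank; in the usual regime where the vocabulary size greatly exceeds the embedding dimension this holds generically, and I would state it as an explicit (mild) non-degeneracy assumption, following \citet{allen}. Granting it, the only vector orthogonal to all the $\bm{u}_c$ is $\bm{0}$, hence $\bm{v}_w = \sum_{i=1}^s \bm{v}_{w_i}$. I expect this last linear-algebra step to be the only point worth flagging, since everything before it is a one-line manipulation: the subtlety is that $\PMI$ additivity by itself pins down $\bm{v}_w - \sum_i \bm{v}_{w_i}$ only up to the orthogonal complement of the span of the context vectors, so the conclusion is exact precisely when that complement is trivial.
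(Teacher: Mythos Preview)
Your argument is correct and matches the paper's own proof essentially line for line: divide (\ref{eq:and_modeling_c}) by (\ref{eq:and_modeling}), take logs to get $\PMI(w,c)=\sum_i \PMI(w_i,c)$, apply (\ref{eq:pmi}), and use arbitrariness of $c$. The only difference is that you make the spanning/non-degeneracy condition on $\{\bm{u}_c\}$ explicit, whereas the paper leaves it implicit in the phrase ``since $c\in V$ is arbitrary.''
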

\begin{proof}
    Dividing (\ref{eq:and_modeling_c}) by (\ref{eq:and_modeling}) and taking the logarithm, we get $\PMI(w,c) = \PMI(w_1, c) + \cdots + \PMI(w_s, c)$. From (\ref{eq:pmi}), $\bm{v}_w^\top\bm{u}_c = \bm{v}_{w_1}^\top\bm{u}_c+\cdots+\bm{v}_{w_s}^\top\bm{u}_c$. Since $c \in V$ is arbitrary, (\ref{eq:and_th_v}) follows.
\end{proof}

\subsection{OR Operation} \label{subsec:th_or}

As mentioned in \autoref{sec:intro}, in addition to AND operation, OR operation can also be considered. In this section, we show that OR operation corresponds to the frequency-weighted average of the embeddings for a set of words. 

\subsubsection{Formulation with Co-occurrence Probability}

OR operation is denoted by operator $\lor$. Let $w$ be the OR word of $w_1, w_2, \ldots, w_s$, \ie~$w = w_1 \lor w_2 \lor \cdots \lor w_n$. For example, $\textit{case} \approx \textit{box} \lor \textit{instance}$. The probability of occurrence of $w$ in each context $c$ can be formulated as the sum of the probabilities of occurrence of $w_1, \ldots, w_s$:
\begin{align}
    \forall c \in V, \quad p(w|c) =  p(w_1|c) + \cdots + p(w_s|c) . \label{eq:or_modeling}
\end{align}
From (\ref{eq:or_modeling}), we get $p(w) = \sum_{i = 1}^s p(w_i)$.

\subsubsection{Computation on Embedding Space}

On the basis of the above simple formulation, we give a method to perform OR operation on the embeddings. 

\begin{theorem}[OR formula] \label{th:or_th}
    We assume that $w, w_1, w_2, \ldots, w_s$ satisfy (\ref{eq:pmi}) and (\ref{eq:or_modeling}). When $|\PMI(w,c)| \ll 1$, word embeddings satisfy
    \begin{align}
        \bm{v}_w \approx \sum_{i = 1}^s \frac{p(w_i)}{p(w)} \bm{v}_{w_i} . \label{eq:or_th_v}
    \end{align}
\end{theorem}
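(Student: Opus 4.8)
The plan is to argue entirely at the level of co-occurrence probabilities: turn the additive OR-modeling (\ref{eq:or_modeling}) into a multiplicative identity among exponentiated PMI values, linearize using $|\PMI(w,c)|\ll 1$, and only then translate back to embeddings through assumption (\ref{eq:pmi}).

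First I would divide (\ref{eq:or_modeling}) by $p(w)$ and rewrite each summand as
\[
\frac{p(w_i\mid c)}{p(w)}=\frac{p(w_i)}{p(w)}\cdot\frac{p(w_i\mid c)}{p(w_i)}.
\]
Writing $\lambda_i:=p(w_i)/p(w)$ and using $p(w)=\sum_{i}p(w_i)$ (already noted just after (\ref{eq:or_modeling})), the weights $\lambda_i$ form a probability distribution, $\sum_i\lambda_i=1$. Since $p(x\mid c)/p(x)=\exp(\PMI(x,c))$ for any word $x$, this gives the exact identity
\[
\exp\!\big(\PMI(w,c)\big)=\sum_{i=1}^s \lambda_i\,\exp\!\big(\PMI(w_i,c)\big).
\]

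Next, under the hypothesis $|\PMI(\cdot,c)|\ll 1$ I would apply the first-order expansion $e^x=1+x+O(x^2)$ to both sides; the constant terms cancel because $\sum_i\lambda_i=1$, leaving
\[
\PMI(w,c)\approx\sum_{i=1}^s \lambda_i\,\PMI(w_i,c),
\]
with residual of order $\max|\PMI|^2$ --- the same ``small PMI'' regime quantified in \autoref{prop:pmi_order} and used to justify \autoref{th:pmi_th}. Substituting (\ref{eq:pmi}) on each side yields $\bm{v}_w^\top\bm{u}_c\approx\big(\sum_i\lambda_i\bm{v}_{w_i}\big)^\top\bm{u}_c$ for every $c\in V$. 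Finally, because this holds for all context words and $\{\bm{u}_c\}_{c\in V}$ spans the embedding space (the vocabulary vastly outnumbers the dimension), I would pick $d$ linearly independent context vectors, stack the corresponding scalar relations, and invert the resulting matrix to conclude $\bm{v}_w\approx\sum_i\lambda_i\bm{v}_{w_i}=\sum_i\frac{p(w_i)}{p(w)}\bm{v}_{w_i}$, which is (\ref{eq:or_th_v}).

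The main obstacle is not any single computation but keeping the chain of approximations honest: the Taylor step discards $O(\max|\PMI|^2)$ terms, and converting the per-context scalar approximations into a single vector approximation amplifies them by the conditioning of the chosen context-vector matrix. I expect the cleanest writeup to present the exact identity $\exp(\PMI(w,c))=\sum_i\lambda_i\exp(\PMI(w_i,c))$ together with the first-order argument --- mirroring the Theorem/Proposition pairing in \autoref{sec:pmi} --- rather than attempting a fully uniform error bound.
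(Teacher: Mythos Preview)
Your proposal is correct and follows essentially the same route as the paper's proof: rewrite $p(\cdot\mid c)$ as $p(\cdot)\exp(\PMI(\cdot,c))$ on both sides of (\ref{eq:or_modeling}), linearize $\exp(x)\approx 1+x$ under $|\PMI|\ll 1$, substitute (\ref{eq:pmi}), and conclude from the relation holding for every $c$. The only cosmetic difference is that you first isolate the exact identity $\exp(\PMI(w,c))=\sum_i\lambda_i\exp(\PMI(w_i,c))$ and make the spanning argument for $\{\bm{u}_c\}$ explicit, whereas the paper linearizes the two sides separately and leaves the last step implicit.
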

\begin{proof}
    See \autoref{sec:or_proof}. 
\end{proof}
OR formula (\ref{eq:or_th_v}) suggests that the embedding of \textit{case} approximates the sum of the embeddings of \textit{box} and \textit{instance}, weighted by their probability of occurrence in the corpus. Note that the OR formula is invariant to the translation of the origin, so it is valid to some extent for SGNS and GloVe without the frequency-weighted centering.

\subsubsection{Discussion}

\paragraph{Relation to experimental results} On the real data, $|\PMI(w,c)|\ll 1$ does not strictly hold, but we confirmed that the OR formula holds well in the experiment in \autoref{subsec:exp_or}. 

\paragraph{Comparison with previous work} \citet{arora-polysemy} obtained the same formula by assuming the random walk of the context vector (RAND-WALK model), but the proof in this paper does not require that assumption.

\subsection{Conditional Embedding and NOT Operation}

With assumption (\ref{eq:pmi}), we derive not only AND and OR operations but also NOT operation. In this section, we formulate the NOT operation using the concept of \emph{conditional embedding}, word embedding that expresses the local relationship between words in a small set of words $A\subset V$. We derive that the conditional embedding of the antonym is proportional to \emph{minus} of the conditional embedding of the original word.

\subsubsection{Formulation with Co-occurrence Probability}

In contrast to human senses, antonyms have the property of being dissimilar and similar at the same time \cite{cruse,willners}, \eg, \textit{hate} and \textit{love} have opposite meanings, but both of them are related to emotion. For this reason, antonyms tend to appear in similar contexts, and thus their word embeddings trained by the method based on the distributional hypothesis \cite{harris54,firth} exhibit a high similarity\footnote{The cosine similarity between \textit{hate} and \textit{love} is about 0.5 for vectors obtained by SGNS (without centering).}. Therefore, antonyms are related to synonyms, making it difficult to understand how they are embedded. In this section, we dispense with the mystery of antonyms by formulating them in a way that takes their similarity into account.

Let us take the following example: the opposite of \textit{mother} is \textit{father} in the ``parent" category, but \textit{daughter} in the ``parent-child relationship" category. In this way, when considering antonyms, one needs to specify a category corresponding to the similarity portion of the antonyms. In this paper, a category is represented by a set of words $A$. From the intuition that the antonym $\lnot w$  of word $w\in A$ corresponds to the complement of $w$ in $A$ when viewed in a small word set $A$, the co-occurrence probability of NOT word can be formulated by the following conditional probability:
\begin{multline}
    p(W = \lnot w \mid W \in A, c) \\ =p(W \in A \setminus \{w\} \mid W \in A, c) , \label{eq:not_modeling}
\end{multline}
where word $W$ denotes a random variable and $p(\cdot | \cdot)$ denotes the conditional probability.
Because the event $W \in A$ appears in the conditioning for the probability of (\ref{eq:not_modeling}), we need embeddings conditioned on $A$ instead of the whole vocabulary. In this paper, we refer to this embedding as \emph{conditional embedding on $A$}.

\subsubsection{Conditional Embedding}

From the analogy to (\ref{eq:pmi}), we consider the equality to be satisfied by the conditional embedding $\bm{v}_{w|A}$ of the word $w$ in set $A$ as follows:
\begin{multline}
    p(W = w \mid W \in A, c) \\
    = p(W = w \mid W \in A) \exp(\bm{v}_{w|A}^\top \bm{u}_{c}) \label{eq:cond_model}.
\end{multline}
From the following \autoref{th:cond_emb}, we can see that conditional embedding can be approximated by frequency-weighted centering on a subset $A$.
\begin{theorem} \label{th:cond_emb}
    When embeddings satisfy (\ref{eq:pmi}),
    \begin{align}
        \bm{v}_{w|A} \approx \bm{v}_{w}- \bm{v}_A, \label{eq:cond_emb_calc}
    \end{align}
    where $p(A) = p(W \in A) = \sum_{w \in A} p(w)$ and $\bm{v}_A = \sum_{w \in A} \frac{p(w)}{p(A)} \bm{v}_w$.
\end{theorem}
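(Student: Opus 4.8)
The plan is to rewrite the defining identity~(\ref{eq:cond_model}) of the conditional embedding as an exact expression for $\bm{v}_{w|A}^\top\bm{u}_c$, and then to replace the normalizing log-sum-exp term by a plain frequency-weighted average, in the same small-PMI spirit as \autoref{prop:pmi_order} and \autoref{th:or_th}.

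First I would unfold the two conditional probabilities appearing in~(\ref{eq:cond_model}): for $w\in A$ and any context $c$ one has $p(W=w\mid W\in A,c) = p(w\mid c)/\sum_{w'\in A}p(w'\mid c)$ and $p(W=w\mid W\in A) = p(w)/p(A)$. Substituting these into~(\ref{eq:cond_model}), solving for the exponential factor, and dividing numerator and denominator by $p(c)$ gives
\[
\exp\!\big(\bm{v}_{w|A}^\top\bm{u}_c\big) = \frac{p(w\mid c)/p(w)}{\sum_{w'\in A}\frac{p(w')}{p(A)}\big(p(w'\mid c)/p(w')\big)} .
\]
Next I would invoke assumption~(\ref{eq:pmi}) in the ratio form $p(w\mid c)/p(w) = \exp(\PMI(w,c)) = \exp(\bm{v}_w^\top\bm{u}_c)$, and likewise for every $w'\in A$; taking logarithms then yields the exact identity
\[
\bm{v}_{w|A}^\top\bm{u}_c = \bm{v}_w^\top\bm{u}_c - \log\!\Big(\sum_{w'\in A}\tfrac{p(w')}{p(A)}\exp(\bm{v}_{w'}^\top\bm{u}_c)\Big) .
\]

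The remaining step is the approximation of that log-sum-exp. Writing $q(w'):=p(w')/p(A)$ for the frequency distribution restricted to $A$, the bracketed quantity is $\log\sum_{w'\in A}q(w')\exp(\bm{v}_{w'}^\top\bm{u}_c)$, that is, the cumulant-generating function of $\bm{v}_{w'}^\top\bm{u}_c=\PMI(w',c)$ under $q$. Its leading term is the $q$-mean $\sum_{w'\in A}q(w')\,\bm{v}_{w'}^\top\bm{u}_c=\bm{v}_A^\top\bm{u}_c$, and the first correction is half the $q$-variance of $\bm{v}_{w'}^\top\bm{u}_c$, which is $O(\Delta^2)$ in the sense of \autoref{prop:pmi_order}. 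Hence $\bm{v}_{w|A}^\top\bm{u}_c\approx(\bm{v}_w-\bm{v}_A)^\top\bm{u}_c$ for every $c\in V$, and since matching inner products against all context vectors $\{\bm{u}_c\}_{c\in V}$ determines the vector (modulo their span, exactly as in the proof of \autoref{th:and_th}), we obtain $\bm{v}_{w|A}\approx\bm{v}_w-\bm{v}_A$.

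The hard part is controlling this last step: the log-sum-exp equals the weighted average only in the limit $\PMI\to 0$, so a rigorous version needs precisely the $|\PMI(w,c)|\ll 1$ hypothesis together with the Taylor/cumulant bookkeeping already used for \autoref{prop:pmi_order} and \autoref{th:or_th}. The secondary point worth making explicit is the now-familiar caveat that matching inner products with every $\bm{u}_c$ pins down $\bm{v}_{w|A}$ only up to the span of the context embeddings.
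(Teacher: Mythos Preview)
Your proof is correct and follows essentially the same route as the paper's: both compute $p(W=w\mid W\in A,c)$ as a ratio and approximate the denominator $\sum_{w'\in A}p(w'\mid c)\approx p(A)\exp(\bm{v}_A^\top\bm{u}_c)$, which the paper simply packages as ``apply the OR formula~(\ref{eq:or_th_v})'' while you unpack the same step directly as a log-sum-exp/cumulant expansion. Your version is more explicit about the $O(\Delta^2)$ error and the span-of-$\{\bm{u}_c\}$ caveat, but the underlying computation is identical.
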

\begin{proof}
    See \autoref{sec:cond_emb_proof}. 
\end{proof}
\autoref{th:cond_emb} allows us to explain the common practice of centering, although typically unweighted, on a particular set of words, \eg, implicit centering in PCA visualization.

\subsubsection{Computation on Embedding Space}

On the basis of the above formulation, we derive a method for computing NOT with word embeddings.

\begin{theorem}[NOT formula] \label{th:not_th}
    Assuming that the words $w$ and $\lnot w | A$ satisfy (\ref{eq:cond_model}), we have
    \begin{align} \label{eq:not-formula-theorem}
        \bm{v}_{\lnot w | A} \approx -\frac{p(W = w \mid W \in A)}{1-p(W = w \mid W \in A)} \bm{v}_{w | A} .
    \end{align}
\end{theorem}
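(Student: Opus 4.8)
The plan is to combine the exponential model (\ref{eq:cond_model}), applied to both $w$ and its negation $\lnot w$ inside $A$, with the defining relation (\ref{eq:not_modeling}), and then linearize. Write $\pi_w := p(W = w \mid W \in A)$ for the within-$A$ prior of $w$. Since $\lnot w$ is identified with the complement $A \setminus \{w\}$, its prior is $p(W = \lnot w \mid W \in A) = p(W \in A \setminus \{w\} \mid W \in A) = 1 - \pi_w$, and, for the same reason, (\ref{eq:not_modeling}) says that $p(W = \lnot w \mid W \in A, c) = 1 - p(W = w \mid W \in A, c)$ for every context word $c$.

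First I would substitute the model (\ref{eq:cond_model}) into both sides of this last identity. Using the model for $w$ on the right and for $\lnot w$ on the left gives the exact relation
\[
(1-\pi_w)\exp(\bm{v}_{\lnot w|A}^\top \bm{u}_c) = 1 - \pi_w\exp(\bm{v}_{w|A}^\top \bm{u}_c)
\]
for all $c \in V$. Next, under the smallness assumption $|\bm{v}_{w|A}^\top \bm{u}_c| \ll 1$ (the conditional-embedding analogue of the $|\PMI(w,c)| \ll 1$ hypothesis used in \autoref{th:or_th}), I would replace each exponential by its first-order expansion $\exp(x) \approx 1 + x$. The constant terms $1 - \pi_w$ cancel on both sides, leaving $(1-\pi_w)\,\bm{v}_{\lnot w|A}^\top \bm{u}_c \approx -\pi_w\,\bm{v}_{w|A}^\top \bm{u}_c$, i.e. $\bm{v}_{\lnot w|A}^\top \bm{u}_c \approx -\frac{\pi_w}{1-\pi_w}\,\bm{v}_{w|A}^\top \bm{u}_c$.

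Finally, since this scalar identity holds for every context word $c \in V$, the same genericity argument used in the AND proof of \autoref{th:and_th} (the context embeddings $\{\bm{u}_c\}$ being full-rank) upgrades it to the vector identity $\bm{v}_{\lnot w|A} \approx -\frac{\pi_w}{1-\pi_w}\,\bm{v}_{w|A}$, which is exactly (\ref{eq:not-formula-theorem}) after unfolding $\pi_w = p(W = w \mid W \in A)$.

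I expect the main obstacle to be controlling the linearization error: the cancellation of the constant terms is exact, but replacing $\exp(x)$ by $1+x$ discards an $O(x^2)$ contribution, so one must argue that $\bm{v}_{w|A}^\top \bm{u}_c$ is genuinely small once we have conditioned on a narrow category $A$. This is intuitively plausible — the conditional PMI of a word within a tight semantic set is small — but it is the step that turns the clean algebra into an approximation and would need to be stated as the governing hypothesis of the theorem. A secondary, more routine point is the passage from "holds for all $c$" to a vector equation, which I would simply invoke as the full-rank assumption already implicit in \autoref{th:and_th}.
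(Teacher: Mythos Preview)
Your argument is correct and arrives at (\ref{eq:not-formula-theorem}), but it follows a different path from the paper's own proof. The paper does not apply the conditional model (\ref{eq:cond_model}) to $w$ at all; instead it rewrites the right-hand side of (\ref{eq:not_modeling}) as a ratio $p(W\in A\setminus\{w\}\mid c)/p(W\in A\mid c)$, invokes the OR formula (\autoref{th:or_th}) on numerator and denominator to obtain $\bm{v}_{\lnot w\mid A}\approx \bm{v}_{A\setminus\{w\}}-\bm{v}_A$, and then finishes by exact algebra on the weighted-average definitions of $\bm{v}_{A\setminus\{w\}}$ and $\bm{v}_A$, together with \autoref{th:cond_emb} to identify $\bm{v}_w-\bm{v}_A$ with $\bm{v}_{w\mid A}$. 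Your route is more self-contained: it needs only the model (\ref{eq:cond_model}) and a single first-order expansion, and it never unpacks $\bm{v}_A$ as a weighted sum. The paper's route, by contrast, makes the structural identity $\bm{v}_{\lnot w\mid A}\approx \bm{v}_{A\setminus\{w\}}-\bm{v}_A$ visible, tying NOT back to OR and to conditional embedding in a way that your direct linearization hides. Both arguments rest on the same underlying smallness hypothesis (small conditional PMI), so the level of rigor is the same; the difference is organizational.
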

\begin{proof}
    See \autoref{sec:not_proof}. 
\end{proof}

 From this formula, we can see that the conditional embedding of the NOT word of $w$ is the vector in the negative direction of the conditional embedding of the original word $w$. 

\subsection{Extension to BERT}

BERT \cite{bert}, which has attracted attention in recent years, obtains word embeddings by predicting a word from its context, and can be regarded as an extension of SGNS for the following reasons. Consider a one-layer BERT model pre-trained by masked LM only. If the attention weight of a [MASK] token is a one-hot vector, BERT predicts [MASK] from one context word and can be regarded as a Skip-gram model \cite{sgns}. Consider the input sentence $c_1 c_2 \cdots c_\ell$ including [MASK].
Setting the attention weight $\bm{\alpha}$ to $\bm{\alpha} = \begin{bmatrix} 0,\ldots, 0,1, 0, \ldots, 0 \end{bmatrix}^\top$, one-hot vector with 1 at $i$-th element, BERT's probability model is
\begin{align}
    &p(\text{[MASK]} = w \mid \text{sentence $c_1 c_2 \cdots c_\ell$}) \nonumber \\
    &\propto \exp
    \left(\bm{u^{\text{BT}}}_w \cdot
        \left(
            \begin{bmatrix}
                \bm{v}_{c_1}^{\text{BT}} & \cdots & \bm{v}_{c_\ell}^{\text{BT}}
            \end{bmatrix}
            \begin{bmatrix}
                \alpha_1 \\
                \vdots \\
                \alpha_\ell
            \end{bmatrix}
        \right)
    \right) \nonumber \\
    &= \exp \left(\bm{u}_w^{\text{BT}} \cdot \bm{v}_{c_i}^{\text{BT}} \right), \label{eq:bert_alpha_onehot}
\end{align}
where $\bm{v}^{\text{BT}}$ is input representation and $\bm{u}^{\text{BT}}$ is output representation.
(\ref{eq:bert_alpha_onehot}) is similar to Skip-gram model $p(w|c) \propto \exp(\bm{v}_c^\top \bm{u}_w)$.
Thus, BERT can be regarded as a generalization of Skip-gram. Therefore, we can expect that our proposed method will apply to BERT to some extent. Some experiments in \autoref{sec:exp} also provide results in BERT.

\section{Experiments} \label{sec:exp}

In order to keep the description concise, the detailed experimental setup is described in \autoref{sec:detail_exp}. 

\newcommand{\origc}[1]{\colorbox[rgb]{0.92,0.92,0.92}{\textcolor[rgb]{0,0,0}{#1}}}
\newcommand{\unifc}[1]{\colorbox[rgb]{0.92,0.96,1}{\textcolor[rgb]{0,0.69,1}{#1}}}
\newcommand{\freqc}[1]{\colorbox[rgb]{1,0.9,0.9}{\textcolor[rgb]{1,0,0}{#1}}}
\newcommand{\abttc}[1]{\colorbox[rgb]{0.95,0.85,1}{\textcolor[rgb]{0.5,0,0.82}{#1}}}
\newcommand{\freq}{\textsf{\freqc{freq}}}
\newcommand{\orig}{\textsf{\origc{orig}}}
\newcommand{\unif}{\textsf{\unifc{unif}}}
\newcommand{\abtt}{\textsf{\abttc{ABTT}}}

\subsection{Centering and PMI Factorization}  \label{subsec:exp_pmi}
In this section, we experimentally confirm that (\ref{eq:pmi}) holds more accurately if we perform frequency-weighted centering (\autoref{subsec:freq_centering}). To show that the accuracy of PMI factorization formula $\PMI(w,c) = \bm{v}_w^\top \bm{u}_c$ is improved by centering the embeddings, we observed the distribution of the error $e_{wc} = \PMI(w, c) - \bm{v}_w^\top \bm{u}_c$ in several experimental settings.

\paragraph{Embeddings} We used 300-dimensional embeddings trained by SGNS and GloVe with text8 corpus\footnote{\url{http://mattmahoney.net/dc/textdata.html}}.
The results are shown for the following three sets of embeddings.
\begin{itemize}
    \item \orig : original embeddings.
    \item \unif : embeddings with uniform centering\footnote{This is a standard post-processing of word embedding \cite{abtt}.} ~\cite{abtt}, \ie, $\bm{v}_w \leftarrow \bm{v}_w - \sum_{w'} \bm{v}_{w'} / |V|$.
    
    \item \freq : embeddings with frequency-weighted centering.
\end{itemize}

\paragraph{Results}

We plotted the histogram of error $e_{wc}$ (\autoref{fig:pmi}). We see that the magnitude of $e_{wc}$ of frequency-weighted centering (\freq) is small and
$\PMI(w,c) = \bm{v}_w^\top \bm{u}_c$ holds more accurately \emph{regardless of whether the method is SGNS or GloVe}.
It is worth noting that frequency-weighted centering (\freq) and uniform centering (\unif) have substantially different results, which is non-trivial.

\begin{figure}[tb]
    \centering
\includegraphics[width=77mm]{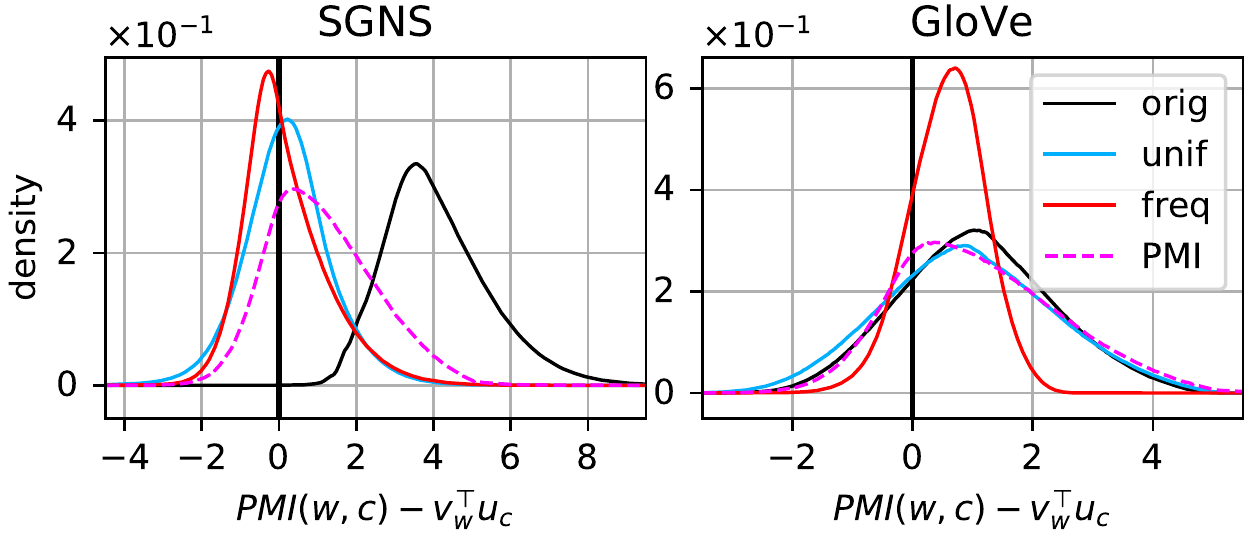}
    \caption{The distribution of the error $e_{wc}$ is plotted for the word pairs $(w,c)$ that co-occur more than once. The distribution of $\PMI(w,c)$ itself is shown as \textcolor[rgb]{0.8,0,0.8}{purple dashed line} for reference of the error order. }
    \label{fig:pmi}
\end{figure}

\subsection{Assessing Accuracy of AND Formula}\label{subsec:exp_and}

From \autoref{th:pmi_th}, \autoref{prop:pmi_order} and \citet{allen}, frequency-weighted centering is expected to result in stronger additive compositionality (\autoref{sec:pmi}). In this section, we experimentally confirm that additive compositionality holds more accurately by frequency-weighted centering. The experiments are for three types of additive compositionality: word-to-sentence, word-to-phrase, and word-to-word.  We also experimentally saw that the same result holds for BERT as well as SGNS and GloVe.

\paragraph{Embeddings} We used 300-dimensional embeddings trained by SGNS and GloVe with Wikipedia\footnote{\url{https://dumps.wikimedia.org/}}. For BERT embeddings, we used the first layer, which corresponds to the target vector of the skip-gram\footnote{\url{https://huggingface.co/bert-base-uncased}}.
In the word-to-sentence experiment, the results for the final layer are also included (\autoref{subsec:add_exp_res_sts_bert})\footnote{In the word-to-word experiment, only the results for the first layer are included because there is no point in contextualizing the word embedding.}.
We compared four types of embeddings: \orig, \unif, \freq~and All-but-the-Top \cite{abtt} (\abtt, for SGNS and GloVe), a post-processing method for embeddings that incorporates uniform centering.

\subsubsection{Word-to-sentence compositionality}  \label{subsubsec:w2s}

\begin{table*}[tb]
    \centering
    \begin{tabular}{c|cccc|cccc|ccc}
        \bhline{0.8pt}
         & \multicolumn{4}{c|}{SGNS} & \multicolumn{4}{c|}{GloVe} & \multicolumn{3}{c}{BERT} \\
        STS$x$ & \orig & \unif & \abtt & \freq & \orig & \unif & \abtt & \freq & \orig & \unif & \freq \\
        \hline \hline
        12 & 0.53 & 0.53 & 0.52 & \bf{0.55} & 0.32 & 0.32 & 0.35 & \bf{0.37} & 0.52 & 0.49 & \bf{0.53} \\
        13 & 0.59 & 0.59 & 0.57 & \bf{0.64} & 0.37 & 0.37 & 0.43 & \bf{0.48} & 0.49 & 0.47 & \bf{0.52} \\
        14 & 0.60 & 0.60 & 0.59 & \bf{0.68} & 0.38 & 0.38 & 0.44 & \bf{0.52} & 0.57 & 0.54 & \bf{0.62} \\
        15 & 0.62 & 0.62 & 0.60 & \bf{0.70} & 0.44 & 0.44 & 0.48 & \bf{0.55} & 0.60 & 0.57 & \bf{0.68} \\
        16 & 0.55 & 0.55 & 0.53 & \bf{0.65} & 0.33 & 0.33 & 0.38 & \bf{0.51} & 0.60 & 0.57 & \bf{0.69} \\
        \bhline{0.8pt}
    \end{tabular}
    \caption{Results of semantic textual similarity tasks. The values are Pearson's correlation coefficients between manually evaluated similarities of sentence pairs and cosine similarities of sentence vector pairs. }
    \label{tbl:sts}
\end{table*}

We evaluated sentence vectors by simply adding word vectors using semantic textual similarity task \cite{sts}. If additive compositionality holds more accurately, it is expected that sentence vectors are more accurate and scores increase.

\paragraph{Results} The results are shown in \autoref{tbl:sts}. 
These values are Pearson's correlation coefficients between the similarity of the two sentences (manually  evaluated) and the cosine similarity between the two sentence vectors.
As we can see, our proposed method \freq~consistently performs the best. All-but-the-Top is also a post-processing method for adjusting the embeddings \cite{arora-latent}, but ours is better in terms of additive compositionality.

\subsubsection{Word-to-phrase compositinality}

\begin{figure}[tb]
    \centering
\includegraphics[width=77mm]{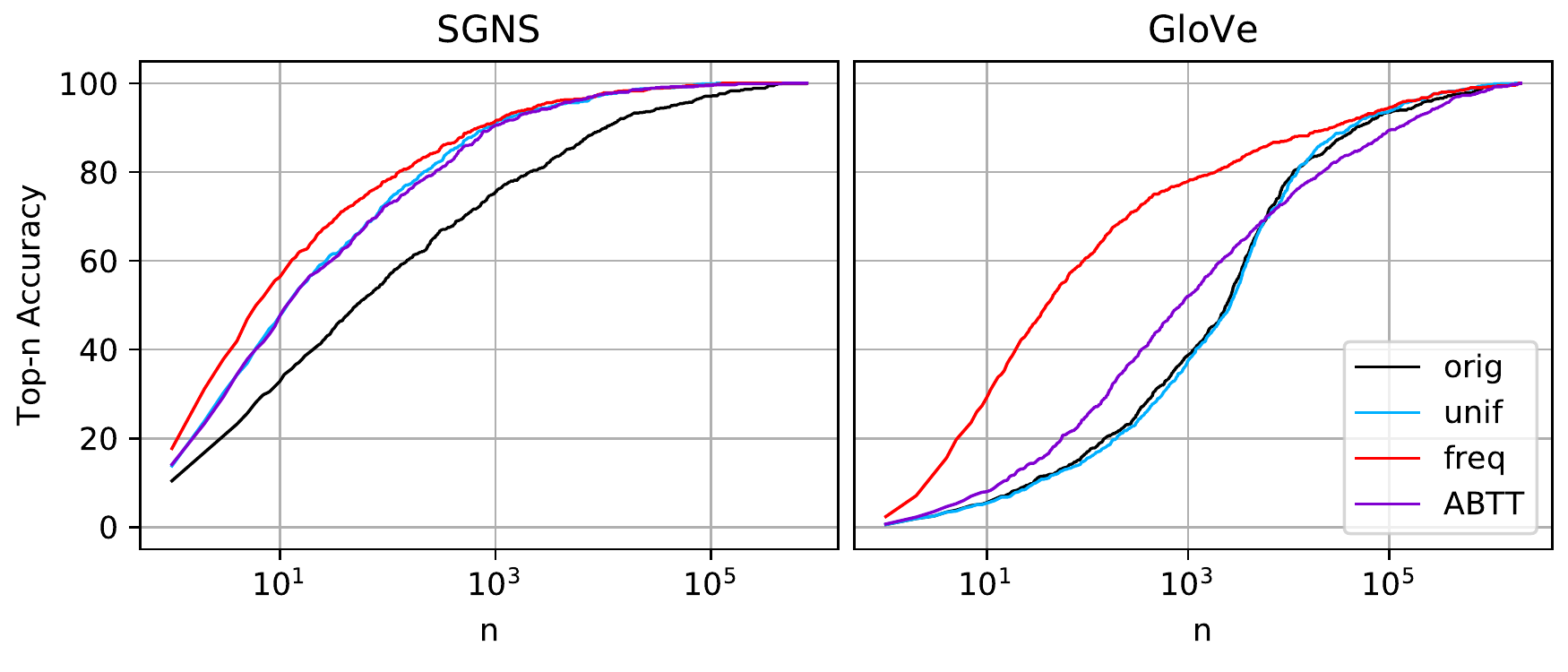}
    \caption{Top-$n$ accuracy of rank (word-to-phrase). 
    Upper left is better.}
    
    \label{fig:and}
\end{figure}

We evaluated how strongly word-to-phrase additive compositionality holds by learning phrase vectors.

\paragraph{Preprocessing of Corpus} We trained phrase vectors by treating multiple words as single word,
\ie, \word{card game} $\rightarrow$ \word{card\_game}. Only phrases with high compositionality included in \citep{multiword,ramisch,reddy} were used\footnote{These datasets include human ratings of the compositionality of phrases. Since words with weak compositionality are not suitable for the additive compositionality experiment, only phrases with a rating of 3 or 4 were used in \citet{multiword} and only phrases with a rating of 3.0 or higher were used in \citet{reddy} or \citet{ramisch}. }.

\paragraph{Evaluation} We calculated the cosine similarities between $\bm{v}_\textit{word1} + \bm{v}_\textit{word2}$ and all the $\bm{v}_w, w \in V$, and how many words had a cosine similarity greater than or equal to the cosine similarity between $\bm{v}_\textit{word1\_word2}$ and $\bm{v}_\textit{word1} + \bm{v}_\textit{word2}$; 
this number is simply denoted as \emph{rank}\footnote{We should not simply use the similarity between $\bm{v}_\textit{word1} + \bm{v}_\textit{word2}$ and $\bm{v}_\textit{word1\_word2}$ as the accuracy of additive compositionality. To be precise, if the similarity with $\bm{v}_\textit{word1\_word2}$ is high and the similarity with other embeddings is low, we can say that additive compositionality is accurate. This paper uses rank as a metric that satisfies this requirement.}.

\paragraph{Results} 
The top-$n$ accuracy of rank is shown in \autoref{fig:and}.
The overall results show that centering, especially with frequency weights, improves the accuracy for additive compositionality. For SGNS, the top-10 accuracy improves by 1.7 times, and for GloVe, the top-100 accuracy improves by 3.5 times. Moreover, the results for GloVe are significantly different between uniform and frequency-weighted centering, which is consistent with the results in \autoref{subsec:exp_pmi}.

\subsubsection{Word-to-word compositinality}

\begin{table}[tb]
    \centering
    \begin{tabular}{c|cccc}
        \hline
         & \orig & \unif & \abtt & \freq \\
         \hline
        SGNS & 0.028 & 0.072 & \bf{0.074} & 0.071 \\
        GloVe & 0.067 & 0.065 & 0.057 & \bf{0.078} \\
        BERT & 0.036 & 0.044 & --- & \bf{0.062} \\
        \hline
    \end{tabular}
    \caption{MRR of rank (word-to-word).}
    \label{tbl:mrr_w2w}
\end{table}

We evaluated the additive compositionality of a word from words such as \word{royal}+\word{woman}=\word{queen}.

\paragraph{Dataset} BATS \cite{bats}, the dataset for the analogy task, specifies the relationship between the two words: the file \texttt{country-capital} contains word pairs such as \word{bankok}:\word{thailand} and \word{beijing}:\word{china}. Using BATS, we created a dataset of word triples $(x, y, z)$ that semantically satisfy $x+y=z$. For example, we assign \word{thailand} to $x$, \word{capital} to $y$, and \word{bankok} to $z$, where $y$ is derived from the dataset name \texttt{country-capital}. There are a total of 9 datasets other than \texttt{country-capital}, (\eg\; \texttt{animal-sound}, \texttt{things-color}), and $y$ is determined in the same way for each.

\paragraph{Evaluation} We used ranks of $\bm{v}_x + \bm{v}_y$ and $\bm{v}_z$ for evaluation. Mean Reciprocal Rank (MRR) was used as the representative value.

\paragraph{Results} \autoref{tbl:mrr_w2w} shows the results. One can see that the proposed method \freq consistently contributes to the performance improvement of additive compositionality. We can also see that, for GloVe and BERT, \freq is superior to the other methods. \freq loses to \unif in SGNS, but this is related to the lack of significant difference in the structure of embeddings between \unif and \freq, as can be seen in \autoref{fig:pmi}.

\subsection{Assessing Accuracy of OR Formula}\label{subsec:exp_or}

In this section, we confirm that the OR formula (\ref{eq:or_th_v}) is valid.

\paragraph{Embeddings}
We used 300-dimensional embeddings trained by SGNS and GloVe with a Wikipedia-based corpus.

\subsubsection{Experiments with artificial OR words}\label{subsec:exp_or_art}

We tested the validity of our theory by artificially creating OR words that exactly satisfy the modeling of OR.

\paragraph{Preprocessing of Corpus}
We generated 500 artificial OR words and learned their embeddings as follows.
We constructed artificial OR words from two randomly selected words (\eg~\textit{apple}, \textit{banana} $\rightarrow$ \textit{apple\_OR\_banana}), and created a new corpus in which all the selected words are replaced by the artificial OR words.
Then we concatenated the original corpus with the new corpus and used it to train word embeddings.

\paragraph{Evaluation}
First, we evaluated the OR formula by cosine similarity and rank of \textit{word1\_OR\_word2}, as in \autoref{subsec:exp_and}.

\paragraph{Results} The average cosine similarity was \textbf{0.936} for SGNS and \textbf{0.907} for GloVe. The average rank was \textbf{1.012} for SGNS and \textbf{1.000} for GloVe. Even though the OR formula is an approximation, the precision of the OR formula is high enough that it is almost always able to predict the correct answer (among 2M words). Since the OR formula is translation-invariant, the result is the same for \orig, \unif, and \freq.

\subsubsection{Experiments with actual OR words}\label{subsec:exp_or_act}

Next, we used actual polysemous words and hypernym-hyponym to evaluate the OR formula. These words do not necessarily satisfy OR modeling exactly, unlike artificial ones. 
We created a dataset of tuples $(w, w_1, ..., w_s)$ satisfying $w = w_1 \lor \cdots \lor w_s$ in the manner described in the \autoref{subsec:apx_5354}. For example, $\word{electronics} = \word{computer} \lor \word{phone}, \quad \word{interaction} = \word{contact} \lor \word{give-and-take} \lor \word{interplay} \lor \word{reciprocation} \lor \word{interchange}$.

\autoref{tbl:hyper-hypo-or} shows the average cosine similarity between $\bm{v}_w$ and the vector calculated by the OR formula from $\bm{v}_{w_1}, \ldots, \bm{v}_{w_s}$. The value in parentheses is the average of the cosine similarity when $w$ is randomly chosen from the dataset. The cosine similarities are significantly high,
indicating that the OR formula works well. Taking $\word{computer} \lor \word{phone} = \word{electronics}$ as an example, the top ten words with high cosine similarity to $\frac{p(\word{computer})}{p(\word{computer}) + p(\word{phone})} \bm{v}_{\word{computer}} + \frac{p(\word{phone})}{p(\word{computer}) + p(\word{phone})} \bm{v}_{\word{phone}}$ are \word{computers}, \word{software}, \word{technology}, \word{internet}, \word{computing}, \word{devices}, \word{electronics}, \word{device}, \word{information}, \word{user}. The words are abstracted from \word{computer} and \word{phone}, and it can be seen that the OR formula is appropriate.

\begin{table}[tb]
    \centering
    \begin{tabular}{c|cc}
        \hline
         & SGNS & GloVe \\
         \hline
        WordNet(noun) & 0.394 (0.199) & 0.315 (0.148) \\
        WordNet(verb) & 0.356 (0.234) & 0.322 (0.214) \\
        Dasgupta et al. & 0.444 (0.273) & 0.439 (0.288) \\
        \hline
    \end{tabular}
    \caption{Cosine-similarity based accuracy of OR formula with the actual corpus (The values in parentheses are the result of randomly selecting word $w$).}
    \label{tbl:hyper-hypo-or}
\end{table}

\subsection{Observation of NOT Formula} \label{subsec:exp_cond}

The visualization of the embeddings of the numbers from -9 to 9 is shown in \autoref{fig:not}\footnote{The origin of the conditional embedding is calculated with uniform weights instead of the actual frequencies. }. 
Confining attention to $A=\{1,\ldots,9\}$,
1 and 9 are located in the negative direction of each other across the origin (red triangle) in the conditional embedding of  $A$; this confirms the NOT formula (\ref{eq:not-formula-theorem}). 
On the other hand, if we expand set $A$ to include negative numbers, 1 and 9 are located in a similar direction from the origin (black $\times$). In this case, the positive and negative numbers are on the opposite sides of the origin, which supports the NOT formula again. As you can see, it is important to determine the category in which antonyms are considered, and the NOT formula is able to formulate this fact well.

\begin{figure}[tb]
    \begin{center}
        \includegraphics[width=55mm]{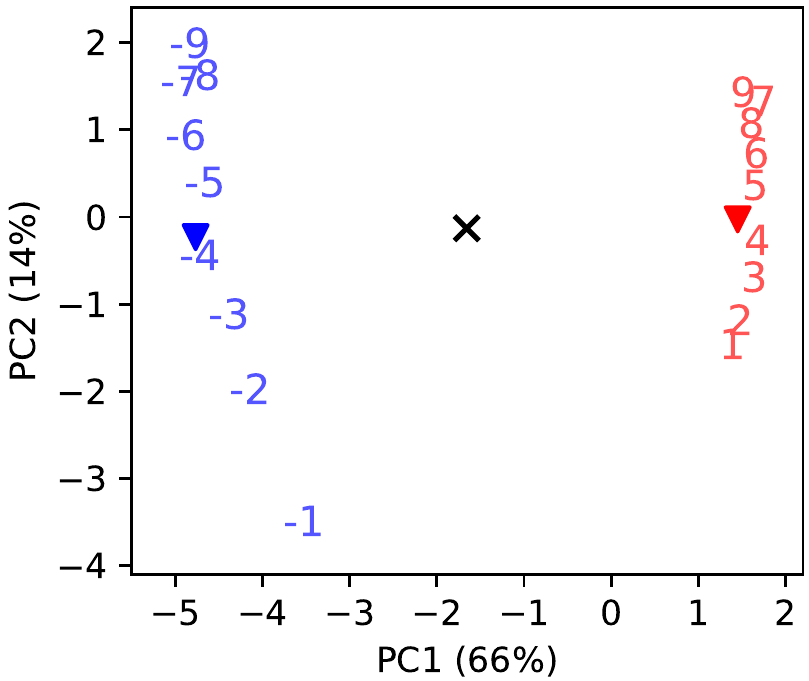}
    \end{center}
    \caption{The embeddings of numbers are visualized using PCA. The $\times$ is the origin of the conditional embedding $\bm{v}_{w|A}$ when $A = \{-9, \ldots, 9\}\ \setminus\{0\}$, and \textcolor{red}{red triangle} and \textcolor{blue}{blue triangle} are the origins of $A$ for positive numbers and negative numbers, respectively.}
    \label{fig:not}
\end{figure}

\section{Connection to Previous Work}

The summaries of the previous research on additive compositionality and their relationships to this study are given below.
\begin{itemize}
    \setlength{\leftskip}{-0.5em}
    \item \citet{arora-latent,arora-polysemy} explained the operations of analogy and OR by considering a latent variable model.
    On the other hand, there is a slight gap between the embedding properties suggested by their theory and the properties of the word embeddings used in practice. For example, their theory shows that high-frequency words have a large norm, but in actual word embeddings, the norm of medium and low-frequency words is large \cite{adriaan}, and this is one of the reasons why additive construction works well \cite{yokoi}. Our theory describes a more realistic embedding model.
    \item \citet{gittens} explained the AND operation with the assumption $p(w) = 1/|V|$ in the Skip-Gram model \cite{skipgram}. While their theory succeeds in explaining the essential reason for additive compositionality, note that it makes the assumption that all words have the same frequency, an assumption that does not hold in practice. Word frequencies are known to have a skewed distribution \cite{zipf}, and a feature of our theory is to incorporate this non-uniform distribution into the theory (\autoref{sec:logical}). 
    \item \citet{allen} explained additive compositionality (AND) and analogy operation based on the assumption (\ref{eq:pmi}). 
    Our theory is positioned as contributing to the elaboration of their theory by resolving the problems of the arbitrariness of bias terms in GloVe and the $\log k$ shift in SGNS, which they had raised as an issue in their theory.
    \item \citet{ethayarajh} proved a necessary and sufficient condition on co-occurrence frequency for analogy operation to hold in SGNS. 
    They prove what relationship of co-occurrence statistics exists between words $w_1$, $w_2$, and $w$ when additive compositionality $\bm{v}_{w_1} + \bm{v}_{w_2} = \bm{v}_w$ holds, as a corollary of the theory of analogy. The difference between their theory and ours is that they focus on the co-occurrence statistics between $w_1$, $w_2$, and $w$, whereas we focus on the relationship of co-occurrence statistics between an arbitrary context $c$ and each $w_1$, $w_2$, $w$.
\end{itemize}

\section{Conclusion}

In this paper, we show that when frequency-weighted centering is performed, SGNS and GloVe share a common structure  and additive compositionality becomes more accurate. We also show how to compute OR and NOT operations by word embeddings in addition to the ordinal additive compositionality (AND). All these ideas are connected to each other by the key formula of PMI, which is represented as the inner product of word embeddings.

Our theory is limited to simple models such as SGNS and GloVe.
We experimentally confirmed the effectiveness of our method on BERT in addition to SGNS and GloVe.
In future work, we aim to interpret BERT theoretically and explain the results of these experiments on BERT and clarify the generality of the experimental results.

\nocite{*} 



\newpage

\appendix

\section*{Appendices}

\section{Proof of \autoref{th:pmi_th}} \label{sec:pmi_proof}

\begin{proof}
    For SGNS, let $\zeta_w = 0, \xi_c = \log \frac{q(c)}{p(c)}, \gamma = \log k$; for GloVe, let $\zeta_w = a_w - \log p(w), \xi_c = b_c - \log p(c), \gamma = -\log Z$. Then, from (\ref{eq:sgns_fac}), (\ref{eq:glove_fac}), we get
    \begin{align}
        \PMI(w,c) = \bm{v}_w^\top \bm{u}_c + \zeta_w + \xi_c + \gamma . \label{eq:pmi_th_st}
    \end{align}
    Multiplying both sides of (\ref{eq:pmi_th_st}) by $p(w)$ and summing with respect to $w \in V$, we get
    \begin{align}
        -\epsilon_c = \bm{\bar{v}}^\top \bm{u}_c + \bar\zeta + \xi_c + \gamma , \label{eq:pmi_th_sumj}
    \end{align}
    where $\bar\zeta = \sum_{w \in V} p(w) \zeta_w$. 
    From (\ref{eq:pmi_th_st}) and (\ref{eq:pmi_th_sumj}):
    \begin{align}
        \PMI(w,c) = \bm{\tilde{v}}_w^\top \bm{u}_c + (\zeta_w - \bar\zeta) - \epsilon_c .  \label{eq:pmi_th_cj}
    \end{align}
    Multiplying both sides of (\ref{eq:pmi_th_cj}) by $p(c)$ and summing with respect to $c \in V$, we get
    \begin{align}
        -\epsilon_w = \bm{\tilde{v}}_w^\top \bm{\bar{u}} + (\zeta_w - \bar\zeta) - \bar{\epsilon} \label{eq:pmi_th_sumi}
    \end{align}
    From (\ref{eq:pmi_th_cj}) and (\ref{eq:pmi_th_sumi}), we have
    \begin{align}
        \PMI(w,c) = \bm{\tilde{v}}_w^\top \bm{\tilde{u}}_c + \bar{\epsilon} - \epsilon_w - \epsilon_c \nonumber
    \end{align}
\end{proof}

\section{Proof of \autoref{prop:pmi_order}} \label{sec:pmi_order_proof}

\begin{proof}
    There exists $c_1 > 0$ such that for all $(w,c) \in V^2$, 
    \begin{multline}
        \left| -1 + \frac{p(w,c)}{p(w)p(c)} \right| \\
        = | -1 + \exp(\PMI(w,c)) | < c_1 \Delta .
    \end{multline}
    \begin{align}
        \epsilon_w &= -\sum_{c \in V} p(c) \log \frac{p(w,c)}{p(w)p(c)} \nonumber \\
        &= -\sum_{c \in V} p(c) \biggl[ \left(-1 + \frac{p(w,c)}{p(w)p(c)}\right) \nonumber \\
        & \qquad + O\left(\left|-1 + \frac{p(w,c)}{p(w)p(c)}\right|^2\right) \biggr] \nonumber \\
        &= \sum_{c \in V} p(c) - \sum_{c \in V} p(c|w) \nonumber \\
        & \qquad- \sum_{c \in V} p(c) O\left(\left|-1 + \frac{p(w,c)}{p(w)p(c)}\right|^2\right) \nonumber \\
        &= -\sum_{c \in V} p(c) O\left(\left|-1 + \frac{p(w,c)}{p(w)p(c)}\right|^2\right) .
    \end{align}
    Therefore, there exists $c_2 > 0$ such that for all $w \in V$,
    \begin{align}
        |\epsilon_w| < \sum_{c \in V} p(c) c_2 c_1^2 \Delta^2 = c_1^2 c_2 \Delta^2 .
    \end{align}
    $|\bar\epsilon| < c_1 c_2 \Delta^2$ also readily follows.
\end{proof}

\section{Proof of \autoref{th:or_th}} \label{sec:or_proof}

\begin{proof}
    Calculating both sides of (\ref{eq:or_modeling}), we get
    \begin{align}
        & p(w|c) = p(w) \exp(\PMI(w, c)) \nonumber \\
        & \qquad \approx p(w) \left(1 + \PMI(w, c)\right) \nonumber \\
        & \qquad = p(w) (1 + \bm{v}_w^\top \bm{u}_{c}), \label{eq:or_th_left} \\
        & \sum_{i=1}^s p(w_i|c) = \sum_{i=1}^s p(w_i) \exp(\PMI(w_i, c)) \nonumber \\
        & \qquad \approx \sum_{i=1}^s p(w_i) (1 + \bm{v}_{w_i}^\top \bm{u}_{c}) \nonumber \\
        & \qquad = p(w) \left[1 + \left(\sum_{i=1}^s \frac{p(w_i)}{p(w)} \bm{v}_{w_i} \right)^\top \bm{u}_{c}\right] . \label{eq:or_th_right}
    \end{align}
    (\ref{eq:or_th_v}) follows the fact that for any $c \in V$, (\ref{eq:or_th_left}) $\approx$ (\ref{eq:or_th_right}) . 
\end{proof}

\section{Proof of \autoref{th:cond_emb}} \label{sec:cond_emb_proof}

\begin{proof}
Calculating the left-hand side of (\ref{eq:cond_model}) using assumption (\ref{eq:pmi}) and OR formula (\ref{eq:or_th_v}), we get
\begin{multline}
    p(W = w \mid W \in A, c) \\
    = \frac{p(W = w, W \in A \mid c)}{p(W \in A \mid c)}
    \approx\frac{p(w) \exp(\bm{v}_{w}^\top \bm{u}_{c})}{p(A) \exp(\bm{v}_A^\top \bm{u}_{c})} \\
    = p(W = w \mid W \in A) \exp((\bm{v}_{w}- \bm{v}_A)^\top \bm{u}_{c}), \label{eq:cond_cooccur_prob}
\end{multline}
By comparing (\ref{eq:cond_model}) and (\ref{eq:cond_cooccur_prob}), we get (\ref{eq:cond_emb_calc}).
\end{proof}

\section{Proof of \autoref{th:not_th}} \label{sec:not_proof}

\begin{proof}
By using (\ref{eq:or_th_v}), the right-hand side of (\ref{eq:not_modeling}) is rearranged as
\begin{multline}
    p(W \in A \setminus \{w\} \mid W \in A, c) \\
    = \frac{p(W\in A \setminus \{w\} \mid c)}{p(W \in A \mid c)} \\
    \approx \frac{p(A \setminus \{w\})}{p(A)} \exp((\bm{v}_{A \setminus \{w\}} - \bm{v}_A)^\top \bm{u}_{c}) .
\end{multline}
Thus $\bm{v}_{\lnot w | A} \approx \bm{v}_{A \setminus \{w\}} - \bm{v}_A$, and further calculation yields
\begin{multline}
    \bm{v}_{\lnot w | A} 
    \approx \frac{p(A)}{p(A) - p(w)} \left(\bm{v}_A - \frac{p(w)}{p(A)}\bm{v}_{w} \right) - \bm{v}_A \\
    = -\frac{p(W = w \mid W \in A)}{1-p(W = w \mid W \in A)} \bm{v}_{w | A} . \label{eq:not_th}
\end{multline}
\end{proof}

\section{Details of Experiments} \label{sec:detail_exp}

The default parameters of the implementation\footnote{\url{https://github.com/tmikolov/word2vec},  \url{https://github.com/stanfordnlp/GloVe}} were used for all but the most notable cases.

\subsection{Details of \autoref{subsec:exp_pmi}}

\paragraph{Corpus} text8 corpus\footnote{\url{http://mattmahoney.net/dc/textdata.html}}, from which low-frequency words ($<100$) were removed.
\paragraph{Hyperparameters for learning word embeddings} We run 100 iterations for 300-dimensional vectors. The size of the context window is 5 words (symmetric context). For SGNS, the number of negative samples $k$ is 15 and subsampling of high-frequency words was disabled. For GloVe, the parameter for the weights of the least-squares method $x_{\max}$ is 100.
\paragraph{Others} For \freq and \unif, $\bm{u}_c$ is also centered.

\subsection{Details of \autoref{subsec:exp_and} and \autoref{subsec:exp_or}}\label{subsec:apx_5354}

\paragraph{Corpus} Wikipedia\footnote{\url{https://dumps.wikimedia.org/}}(2.1G tokens)
\paragraph{Hyperparameters for learning word embeddings} The dimension of the word embeddings is 300 and the size of the context window is 5 words. For SGNS, the number of negative samples $k$ is 15. For GloVe, the parameter for the weights of the least-squares method $x_{\max}$ is 100.
\paragraph{Artificial OR words} In \autoref{subsec:exp_or_art}, the words used to construct artificial polysemous words were those with more than 100 occurrences. In the calculation of rank, \textit{word1} and \textit{word2} were excluded from the search.
\paragraph{Actual OR words} 
Each dataset shown in \autoref{tbl:hyper-hypo-or} was created in the following way.
\begin{description}
    \item[Dasgupta et al.] We used the dataset of OR words manually constructed by \citet{dasgupta}. It mainly contains homographs that satisfy $w \approx w_1 \lor w_2$. The sample size was 22.
    \item[WordNet(noun)] By extracting hypernym-hyponym relations from WordNet, we created a set of tuples $(w, w_1, \cdots, w_s)$ satisfying $w \approx w_1 \lor \cdots \lor w_s$. We used nouns with a frequency of 100 or more. The sample size was 985.
    \item[WordNet(verb)] It differs from WordNet(noun) only in that contained words are verbs. The sample size was 530.
\end{description}

\subsection{Details of \autoref{subsec:exp_cond}}

\paragraph{Word embeddings} GloVe pre-trained with Common Crawl (840G tokens)\footnote{\url{https://nlp.stanford.edu/projects/glove/}}
\paragraph{Others} 0 is not used because the sign cannot be defined.

\section{Additional Experimental Results}

\subsection{\autoref{subsubsec:w2s}} \label{subsec:add_exp_res_sts_bert}

The results of semantic textual similarity using the final layer of BERT are shown in \autoref{tbl:sts_final_bert}. It can be seen that \freq is almost consistently the best.

\begin{table}[tb]
    \centering
    \begin{tabular}{c|ccc}
        \hline
         & \orig & \unif & \freq \\
        \hline
        STS12 & \bf{0.350} & 0.335 & 0.334 \\
        STS13 & 0.254 & 0.277 & \bf{0.280} \\
        STS14 & 0.377 & 0.391 & \bf{0.402} \\
        STS15 & 0.457 & 0.486 & \bf{0.493} \\
        STS16 & 0.446 & 0.455 & \bf{0.472} \\
        \hline
    \end{tabular}
    \caption{The results of semantic textual similarity using the final layer of BERT. The values are Pearson's correlation coefficients between manually evaluated similarities of sentence pairs and cosine similarities of sentence vector pairs. }
    \label{tbl:sts_final_bert}
\end{table}

\end{document}